\def\BibTeX{{\rm B\kern-.05em{\sc i\kern-.025em b}\kern-.08em
    T\kern-.1667em\lower.7ex\hbox{E}\kern-.125emX}}
\def \xbf {\mathbf{x}}
\def \Zbf {\mathbf{Z}}
\def \Ebb {\mathbb{E}}
\def \Rbb {\mathbb{R}}
\def \Hbb {\mathbf{H}}
\def \Wbf {\mathbf{W}}
\def \thetabf{\boldsymbol{\theta}}
\def \vec {\text{vec}}
\def \obf {\boldsymbol{0}}
\def \Ical{\mathcal{I}}
\newcommand\scalemath[2]{\scalebox{#1}{\mbox{\ensuremath{\displaystyle #2}}}}
\newtheorem{theorem}{Theorem}
\newtheorem{assumption}{Assumption}
\newtheorem{definition}{Definition}
\newtheorem{lemma}{Lemma}
\newtheorem{remark}{Remark}
\begin{document}

\title{\LARGE \bf Neural Contextual Bandits Under Delayed Feedback Constraints}
\author{Mohammadali Moghimi, Sharu Theresa Jose, and Shana Moothedath
\thanks{M. Moghimi and S.T. Jose are with the School of Computer Science, University of Birmingham, Birmingahm B15 2TT, UK
        {\tt\small mmoghimi486@gmail.com, s.t.jose@bham.ac.uk}}%
\thanks{S. Moothedath is with the Department of Electrical Engineering, Iowa State University, Ames, IA 50011, USA
        {\tt\small mshana@iastate.edu}}%
}

\maketitle

\begin{abstract}
This paper presents a new  algorithm for neural contextual bandits (CBs) that addresses the challenge of delayed reward feedback, where the reward for a chosen action is revealed after a random, unknown delay.
This scenario is common in applications such as online recommendation systems and clinical trials, where reward feedback is delayed because the outcomes or results of a user’s actions (such as recommendations or treatment responses) take time to manifest and be measured. The proposed algorithm, called \textit{Delayed NeuralUCB}, uses upper confidence bound (UCB)-based exploration strategy. Under the assumption of independent and identically distributed sub-exponential reward delays, we derive an upper bound on the cumulative regret over $T$-length horizon, that scales as $O(\tilde{d} \sqrt{T \log T}+  \tilde{d}^{3/2} D_+\log(T)^{3/2})$ where $\tilde{d}$ denotes the effective dimension of the neural tangent kernel matrix, and $D_+$ depends on the expected delay $\Ebb[\tau]$. We further consider a variant of the algorithm, called Delayed NeuralTS, that uses Thompson Sampling based exploration.
Numerical experiments on real-world datasets, such as MNIST and Mushroom, along with comparisons to benchmark approaches, demonstrate that the proposed algorithms effectively manage varying delays and are well-suited for complex real-world scenarios.
\end{abstract}


\section{Introduction}

The stochastic contextual bandit (CB) problem has gained immense interest in recent years due to its application in various domains, including healthcare, finance, and recommender systems \cite{durand2018contextual, shen2015portfolio,mcinerney2018explore,lin2024distributed, lin2022stochastic}. The CB is a sequential decision-making problem where, in each round, the agent (or decision-maker) is presented with $K$ actions and associated contextual information. Based on the observed contexts, the agent chooses an action and receives a stochastic reward from the environment. The reward generating mechanism is unknown to the agent, and the agent only observes the received reward. The agent's objective is to maximize the expected cumulative rewards over a specified horizon of length $T$, or equivalently to minimize the cumulative regret over the horizon.

Many provably efficient algorithms have been developed for stochastic CBs assuming linear reward models -- linear upper confidence bound (LinUCB \cite{chu2011contextual}) and linear Thompson Sampling (LinTS \cite{agrawal2013thompson}) -- as well as generalized linear models \cite{filippi2010parametric}, and neural reward models (NeuralUCB \cite{zhou2020neural} and NeuralTS \cite{zhang2020neural}). A common feature of all these approaches is that they assume \textit{immediate} reward feedback from the environment. However, this is not true in many practical settings. For instance, in an online recommendation system (e.g., Netflix), thousands of movie recommendations are suggested to the user in milliseconds. However, it may take hours or even days for the user to respond to them. The same holds true in clinical trials, where the response time to administered treatments can vary significantly. 

Recent research \cite{joulani2013online,pike2018bandits,vernade2017stochastic,mandel2015queue} aims to model such scenarios by studying bandits with \textit{delayed reward feedback}: At each round $t $, the environment generates a random delay, unknown to the agent, and schedules an observation time for the reward $r_{t,a_t}$ corresponding to the action $a_t$ chosen at $t^{\rm th}$ round. Theoretically, the presence of delayed feedback introduces uncertainty regarding the information available to the agent at any given time due to missing rewards, which complicates the standard regret guarantees. Consequently, recent works have focused on developing algorithms tailored to the delayed-reward feedback setting. 

In this context, relevant recent works include \cite{zhou2019learning} and \cite{blanchet2024delay}, which develop novel algorithms for \textit{generalized linear contextual bandits} with delayed rewards. These algorithms boost the exploration bonus when the number of missing rewards is large, resulting in regret guarantees that scale as $\tilde{O}(d\sqrt{T}+\sqrt{dT(\Ebb[\tau]+M))}$, where $d$ is the dimension of the context vector, $\Ebb[\tau]$ is the mean of the unknown reward, and $M$ is a delay-dependent parameter. However, the delay terms in their regret scale with $\sqrt{T}$. Subsequent work \cite{howson2023delayed} proposed a simpler algorithm for generalized linear bandits, eliminating the scaling of delay with $\sqrt{T}$ and achieving a regret bound of $\tilde{O}(d\sqrt{T}+d^{3/2}\Ebb[\tau])$. This, however, is achieved at the cost of increased dependence of the regret on the dimension $d$. Here $\tilde{O}$ hides the logarithmic terms.

Despite these advances, most research on delayed reward feedback has focused on generalized {\em linear} CBs, which are quite restrictive to apply to real-world scenarios with complex reward functions. Neural CBs that leverage the strong representation power of deep neural networks are better suited for modeling complex reward functions \cite{zhou2020neural,zhang2020neural,mandel2015queue,kassraie2022neural,xu2020neural,zhu2023scalable,salgia2023provably}. 
In this work, inspired by \cite{blanchet2024delay} and \cite{howson2023delayed},  we introduce  new neural CB algorithms that can efficiently handle delayed reward feedback.

The following are our main contributions:
\begin{enumerate}
    \item We propose a \textit{Delayed NeuralUCB} algorithm and its variant \textit{Delayed NeuralTS} algorithm for neural CBs with delayed reward observation.
    \item  For the Delayed NeuralUCB algorithm, we derive a novel upper bound on the cumulative regret over $T$-length horizon, that scales as $O(\tilde{d} \sqrt{T \log T}+  \tilde{d}^{3/2} D_+\log(T)^{3/2})$, under the assumption of independent and identically distributed sub-exponential reward delays. Here, $\tilde{d}$ denotes the effective dimension of the neural tangent kernel gram matrix, and $D_+$ depends on the expected delay $\Ebb[\tau]$. Importantly, the impact of delay does not scale linearly with $T$.
    \item We conduct extensive experiments and compare our approach against multiple benchmarks on real-world MNIST and Mushroom datasets with various delay distributions. The results demonstrate that our proposed algorithms effectively handle different delay scenarios.
\end{enumerate} 

{\bf Organization:} We introduce the delayed reward feedback model for neural CBs in Section~\ref{sec:problemsetting}. Section~\ref{sec:algorithm} introduces the proposed Delayed NeuralUCB algorithm and derives an upper bound on the cumulative regret. Section~\ref{sec:proof} discusses the proof of the regret bound. Section~\ref{sec:experiments} introduces Delayed NeuralTS algorithm and provides numerical experiments evaluating the performance of our proposed algorithms. Finally, we conclude in Section~\ref{sec:conclusion}.
  
\section{Problem Setting}\label{sec:problemsetting}
In this section, we first introduce the neural CB setting and then detail the delayed feedback environment under study.
\subsection{Neural Contextual Bandits}\label{sec:neuralcontextualbandits}
Consider the stochastic $K$-armed CB problem. At each round $t \in [T]$, the agent observes the \textit{context} consisting of $K$ feature vectors $\{\xbf_{t,a} \in \Rbb^d: a=1,\hdots,K\}$ corresponding to each of the $K$ arms. Based on the observed context, the agent selects an action $a_t$ and receives a reward $r_{t,a_t}$.  We assume that the reward is generated as,
\begin{align}
    r_{t,a_t}= h(\xbf_{t,a_t}) + \xi_t, \label{eq:true_reward_model}
\end{align}where $h(\cdot)$ is an unknown mean-reward function satisfying $0 \leq h(\xbf) \leq 1$, and $\xi_t$ is a zero-mean noise.
The goal of the agent is to design an action-selection policy that minimizes the cumulative regret over $T$ rounds,
\begin{align}
    R_T=\Ebb\Bigl[\sum_{t=1}^Tr_{t,a_t}- r_{t,a_t^*} \Bigr],
\end{align} defined with respect to the optimal action $a^*_t = \arg \max_{a \in [1,\hdots,K]}h(\xbf_{t,a})$ that maximizes the mean-reward.

Neural contextual bandits \cite{zhang2020neural, zhou2020neural} estimate the unknown mean reward function $h(\cdot)$ in Eq.~\eqref{eq:true_reward_model} using a fully connected neural network with depth $L\geq 2$ as
\begin{align}
    f(\xbf;\thetabf)=\sqrt{m} \Wbf_L \sigma\Bigl(\Wbf_{L-1}\sigma \Bigl(\hdots \sigma(\Wbf_1(\xbf)) \Bigr) \Bigr),
\end{align}where $\sigma(x)=\max\{x,0\}$ is the rectified linear unit (ReLU) activation function, $\Wbf_1 \in \Rbb^{m \times d}$, $\Wbf_i \in \Rbb^{m \times m}$ for $2\leq i\leq L-1$ and $\Wbf_L \in \Rbb^{m \times 1}$, with $\thetabf=[\vec(\Wbf_1)^{\top},\hdots, \vec(\Wbf_L)^{\top}]^{\top} \in \Rbb^{p} $ with $p=m+md+m^2(L-1)$. Furthermore, we denote the gradient of the neural network function by $g(\xbf;\thetabf)=\nabla_{\thetabf}f(\xbf;\thetabf) \in \Rbb^p.$
\subsection{Contextual Bandits with Delayed Feedback}
We now consider the setting where in each round $t$, the agent does not immediately observe the reward corresponding to the chosen action $a_t$. Instead,
in each $t$-th round,
\begin{itemize}
    \item The agent receives the context and selects action $a_t$.
    \item Unbeknownst to the agent, the environment generates a random delay $\tau_t \sim p_{\tau}$, with $\tau_t \in [0,\infty)$, a random reward $r_{t,a_t}$ as in Eq.~\eqref{eq:true_reward_model}, and then schedules an observation time of the reward as $\lceil t+\tau_t \rceil$.
    \item The agent receives delayed rewards from its previous actions: $\{(s,r_{s,a_s}): t-1 <s+\tau_s \leq t\}$. Note here that when the agent receives the delayed reward, it knows the time index when the corresponding action was taken.
\end{itemize}
Thus, at the time of making decisions at the $t^{\rm th}$ round, the learner has access to the rewards revealed until time $t-1$, i.e., the set of past rewards $\{r_{s,a_s}: s+ \tau_s \leq t-1\}$. 
In standard bandit learning, the reward parameters are estimated iteratively using the information from past actions and rewards. However, in the delayed setting, since the agent has access to only a subset of the data and the regret at $t$ depends on all past actions, it creates a limited information scenario where the agent must make decisions based on partial feedback. 
Based on this information structure, we define the $\sigma$-algebra generated by the set of observed information at the beginning of round $t$ as
\begin{align*} \mathcal{F}_{t-1} &=\sigma\Bigl(\{(\{\xbf_{s,a}\}_{a=1}^K,a_s,r_{s,a_s}): s+\tau_s \leq t-1\} \nonumber \\& \cup \{(\{\xbf_{s,a}\}_{a=1}^K,a_s): 1 \leq s \leq t-1, s+\tau_s >t-1\} \Bigr). \end{align*}
\subsection{Assumptions}
We make the following key assumptions on the reward noise variable $\xi_t$ in Eq.~\eqref{eq:true_reward_model} and the reward delay $\tau_t$.
\begin{assumption} \label{assum:1}
    Let $R \geq 0$. Then, the moment generating function of the reward noise distribution conditioned on the observed information satisfies the following inequality:
    \begin{align}
        \Ebb\Bigl[\exp (\gamma \xi_t) |\mathcal{F}_{t-1}\Bigr] \leq \exp(\gamma^2 R^2/2), \nonumber
    \end{align} for all $\gamma \in \mathbb{R}$.
\end{assumption}

In addition to the above assumption, we further assume that the delay random variables $\tau_t$ are $(\alpha,b)$-sub-exponential.
\begin{assumption}\label{assum:delays}
    The delays $\{\tau_t\}_{t=1}^{\infty}$ are non-negative, independent and identically distributed $(\alpha,b)$-sub-exponential random variables, i.e., their moment generating function satisfies the following inequality:
    \begin{align}
        \Ebb[ \exp(\gamma (\tau_t -\Ebb[\tau_t]))] \leq \exp(\alpha^2\gamma^2/2),
    \end{align} for some $\alpha,b\geq 0$, and all $|\gamma|\leq 1/b$.
\end{assumption}

The above assumption follows from \cite{howson2023delayed}  and includes the class of delay distributions with sub-exponential tails like $\chi^2$ and exponential distributions. 

\section{Delayed NeuralUCB Algorithm}\label{sec:algorithm}
\begin{algorithm}[t]
\caption{Delayed NeuralUCB Algorithm} \label{alg:neuralUCB}
\begin{algorithmic}[1]
\Require Number of rounds $T$, regularization parameter $\lambda$, exploration parameter $\nu$, confidence parameter $\delta$, norm parameter $S$, step size $\eta$, number of gradient descent steps $J$, network width $m$, network depth $L$
\State \textbf{Initialization:} Randomly initialize $\thetabf_0$ from Gaussian distribution
\State Initialize $\Zbf_0 = \lambda \mathbf{I}$, $\mathcal{I}_0=\emptyset$.
\For{$t = 1$ to $T$}
    \State Observe contexts $\{\xbf_{t,a}\}_{a=1}^K$
    \For{$a = 1$ to $K$}
        \State Compute $U_{t,a} = f(\xbf_{t,a}; \theta_{t-1}) + \gamma_{t-1} \sqrt{g(\xbf_{t,a}; \theta_{t-1})^\top \Zbf_{t-1}^{-1} g(\xbf_{t,a}; \theta_{t-1}) / m}$
    \EndFor
    \State Select action $a_t = \arg \max_{a \in [K]} U_{t,a}$
    \State Play action $a_t$ and observe  rewards $\{r_{s,a_s}: s\leq t, t-1 < s+\tau_s \leq t\}$ revealed at round $t$.
    \State Update $\scalemath{0.9}{\mathcal{T}_t = \{s: s\leq t, t-1<s+\tau_s \leq t\},$  $\Ical_t = \Ical_{t-1} \cup \mathcal{T}_t}$
    \State Update $\scalemath{0.9}{\Zbf_t = \Zbf_{t-1} + \sum_{s \in \mathcal{T}_t} g(\xbf_{s, a_s}; \thetabf_{t-1}) g(\xbf_{s, a_s}; \thetabf_{t-1})^\top / m}$
    \State Update $\scalemath{0.9}{\thetabf_t = \text{TrainNN}(\lambda, \eta, J, m, \{(\xbf_{s,a_s}, a_s, r_s)\}_{s \in \Ical_t}, \thetabf_0)}$
    \State Compute $\gamma_t$ as in Eq.~\eqref{eq:gamma_t}.
\EndFor
\end{algorithmic}
\end{algorithm}
\begin{algorithm}[t]
\caption{TrainNN$(\lambda,\eta,J,m,\{\xbf_{s,a_s}, a_s, r_s\}_{s \in \mathcal{I}_t},\thetabf_0)$ }\label{alg:TrainNN}
\begin{algorithmic}[1]
\Require Regularization parameter $\lambda$, step size $\eta$, number of gradient descent steps $J$, dataset $\{\xbf_{s,a_s}, a_s, r_s\}_{s \in \mathcal{I}_t}$, previous network parameters $\thetabf_{t-1}$
\State \textbf{Initialization:} Initialize network parameters $\thetabf_0$
\State Define $\mathcal{L}(\thetabf)= \sum_{s \in \mathcal{I}_t} (f(\xbf_{s,a_s},a_s,\thetabf)-r_s)^2/2+m\lambda \Vert \thetabf-\thetabf_0\Vert_2^2/2$
\For{$j = 1$ to $J$}
    \State Update the parameters: 
    $
    \thetabf_j \leftarrow \thetabf_{j-1} - \eta \nabla \mathcal{L}(\thetabf_{j-1})
    $
\EndFor
\State Return $\thetabf_J$
\end{algorithmic}
\end{algorithm}

In this section, we present our Delayed NeuralUCB algorithm that accounts for random delays in reward observations. Described under Algorithm~\ref{alg:neuralUCB}, the proposed algorithm adapts the recently proposed NeuralUCB algorithm  \cite{zhou2020neural} to efficiently handle delayed reward feedback.    NeuralUCB algorithm leverages deep neural network to predict the mean rewards $f(\xbf_{t,a};\thetabf)$ for various actions based on the observed context-action-reward tuples, while leveraging upper confidence bound (UCB) to guide the exploration. 

Delayed NeuralUCB starts by initializing the neural network parameters $\thetabf_0$. This is done by randomly generating each entry of $\thetabf_0$ from an appropriate Gaussian distribution (please refer to \cite{zhou2020neural} for more details on exact initialization). At each $t^{\rm th}$ round, the algorithm observes the context vectors $\{\xbf_{t,a}\}_{a=1}^K$ corresponding to all the $K$ actions, and uses it  to compute an upper confidence bound $U_{t,a}$ (line 6). In the definition of $U_{t,a}$,  $g(\xbf_{t,a};\thetabf_{t-1})$ denotes the gradient function, while $\gamma_t$ is the UCB parameter defined as
\begin{align}
    \gamma_t &=\sqrt{1+C_1m^{-1/6}\sqrt{\log m}L^4|\Ical_t|^{7/6}\lambda^{-7/6)}} \Bigl( \nu \Bigl[\log  \frac{{\rm det}(\Zbf_t)}{{\rm det}(\lambda \mathbf{I})} \nonumber\\&+C_2 m^{-1/6}\sqrt{\log m}L^4|\Ical_t|^{5/3}\lambda^{-1/6}-2\log \delta\Bigr]^{1/2}  +\sqrt{\lambda S}\Bigr)\nonumber\\&+(\lambda+C_3|\Ical_t|L)\Bigl((1-\eta m\lambda)^{J/2}\sqrt{|\Ical_t|/\lambda} \nonumber\\&+m^{-1/6}\sqrt{\log m}L^{7/2}|\Ical_t|^{5/3}\lambda^{-5/3}(1+\sqrt{|\Ical_t|/\lambda}) \Bigr) \label{eq:gamma_t},
\end{align}where $\Ical_t =\{s: s+\tau_s \leq t\}$ is the set of time indices with rewards revealed until time $t$. $S$ is a norm parameter (made precise later) and $C_1,C_2,C_3>0$ are constants. Additionally, $m$ and $L$ respectively denote the  width and depth of the neural network, $\lambda$ denotes the regularization parameter in Algorithm~\ref{alg:TrainNN} while $\eta$ denotes the step size, and $\nu,\delta>0$ denote the exploration parameter and confidence parameter respectively. 

At round $t$, the action $a_t$ that maximizes the UCB is selected (line 8). 
Subsequently, the agent receives rewards $\{r_{s,a_s}: s\leq t, t-1 < s+\tau_s \leq t\}$ from past actions that are revealed at round $t$. The agent uses this to keep track of the set $\mathcal{T}_t= \{s:s\leq t, t-1<s+\tau_s \leq t\}$ that contains time stamps with reward information revealed at $t$ and updates $\Ical_t=\Ical_{t-1}\cup \mathcal{T}_t$. The agent updates the designer matrix $\Zbf_t$ using the context information available corresponding to the time stamps in $\mathcal{T}_t$. Finally, at the end of round $t$, the parameters $\thetabf_t$ of the neural network are updated using gradient descent (Algorithm~\ref{alg:TrainNN}) on the dataset $\{(\xbf_{s,a_s},a_s,r_s)\}_{s \in \Ical_t}$ with complete reward information. 

\begin{remark}
Delayed NeuralUCB differs from NeuralUCB in that the update of the designer matrix $\Zbf_t$ at $t^{\rm th}$ iteration (see line 11) happens only when at least one past reward is revealed during that iteration. 
 In our setting, unlike NeuralUCB, there can be rounds where no rewards are revealed, preventing the update of the design matrix.
 Additionally, the UCB parameter $\gamma_t$ in Eq.~\eqref{eq:gamma_t} depends on the total number $|\Ical_t|$ of past rewards revealed until time $t$, which is less than or equal to $t$.
\end{remark}

\subsection{Main Result: Regret Bound}
We now present our main result: an upper bound on the cumulative regret $R_T$ of the Delayed NeuralUCB algorithm. The regret analysis will be built on the theory of neural tangent kernel matrix following \cite{zhang2020neural}. To this end, we first define the neural tangent kernel matrix as below.

\begin{definition}[\cite{jacot2018neural}]
    Let $\{\xbf_i\}_{i=1}^{TK}$ denote the collection of all contexts $\{\xbf_{t,a}\}$. Define  for $i,j \in [TK]$ and $l \in [L]$,
    \begin{align*}
        \tilde{\Hbb}_{i,j}^{(1)}&=\Sigma_{i,j}^{(1)} = \langle \xbf_i, \xbf_j\rangle, \mathbf{A}_{i,j}^{(l)}=\begin{bmatrix}
            \Sigma_{i,i}^{(l)} & \Sigma_{i,j}^{(l)}\\
            \Sigma_{j,i}^{(l)} & \Sigma_{j,j}^{(l)}
        \end{bmatrix}     , \\
        \Sigma_{i,j}^{(l+1)}&=2 \Ebb_{u,v \sim \mathcal{N}(\mathbf{0}, \mathbf{A}_{i,j}^{(l)})}[\sigma(u)\sigma(v)],\\
        \tilde{\Hbb}_{i,j}^{(l+1)}&=2\tilde{\Hbb}_{i,j}^{(l)}\Ebb_{u,v \sim \mathcal{N}(\mathbf{0}, \mathbf{A}_{i,j}^{(l)})}[\sigma'(u)\sigma'(v)]+\Sigma_{i,j}^{(l+1)}.
    \end{align*} Then, $\Hbb=(\tilde{\Hbb}^{(L)}+\Sigma^{(L)})/2$ is called the neural tangent kernel (NTK) matrix on the context set. Here,  the Gram matrix $\Hbb$ $\in \mathbb{R}^{TK\times TK}$ of the NTK for $L$-layer neural networks is defined recursively from the input layer $l=1$ to the output layer $l=L$. Finally, $\sigma'$ denotes the derivative of the function $\sigma(x)=\max\{x,0\}.$ 
\end{definition}
Furthermore, we define the \textit{effective dimension} of the neural tangent kernel matrix $\Hbb$ as
\begin{align}
    \tilde{d}= \frac{\log {\rm det} (\mathbf{I}+\Hbb/\lambda)}{\log(1+TK/\lambda)} \label{eq:effectivedimension}.
\end{align}
\begin{assumption}\label{assum:3} There exists $\lambda_0>0$ such that
    $\Hbb \geq \lambda_0 \mathbb{I}$. Furthermore, for any $ 1 \leq i \leq TK$, $\Vert \xbf_i \Vert_2 =1$ and $[\xbf_i]_j=[\xbf_i]_{j+d/2}$.
\end{assumption}
The assumption above follows from \cite{zhang2020neural}, with the assumption on context vectors ensuring that $f(\xbf;\thetabf_0)=0$ for any $\xbf$, when initialized as in Delayed NeuralUCB.

We are now ready to present the main result. 
\begin{theorem}\label{thm:maintheorem}
    Let $\tilde{d}$ be the effective dimension, and $\mathbf{h}=[h(\xbf_i)]_{i=1}^{TK}$. Let $\Ebb[\tau]$ denote the mean of the delay distribution that satisfies Assumption~\ref{assum:delays}. 
Under Assumption~\ref{assum:1} and Assumption~\ref{assum:3}, there exists constants $C_1,C_4>0$ such that for any $\delta \in (0,1)$ if
\begin{align*}
    m &\geq {\rm poly}(T,L,K,\lambda^{-1},\lambda_0^{-1},S^{-1},\log(1/\delta)),\\
    \eta &= C_1(mTL+m\lambda)^{-1},
\end{align*}
 $\lambda \geq \max\{1,S^{-2}, \mathcal{L}^2\}$, $\mathcal{L} \geq \Vert g(\xbf;\thetabf)\Vert_2$ for all $\xbf,\thetabf$, and $S \geq \sqrt{2\mathbf{h}^{\top}\mathbf{H}^{-1}\mathbf{h}}$, then with probability at least $1-\delta$, the regret of Algorithm 1 satisfies
\begin{small}
\begin{align}
    R_T &\leq \scalemath{0.9}{\Biggl[\sqrt{T \Bigl( 2 \tilde{d}\log\Bigl( 1+\frac{TK}{\lambda}\Bigr)+2\Bigr)}+ \frac{D_+}{2} \Bigl( 2 \tilde{d}\log\Bigl( 1+\frac{TK}{\lambda}\Bigr)+2\Bigr) \Biggr]} \nonumber \\&   \biggl[2\Bigl( \nu \Bigl[\tilde{d}\log(1+ \frac{TK}{\lambda})+ 2 -2\log \delta\Bigr]^{1/2}  +2\sqrt{\lambda S}\Bigr) \nonumber \\&+2(\lambda+C_4TL)\Bigl(\Bigl(1-\frac{\lambda}{TL}\Bigr)^{J/2}\sqrt{\frac{T}{\lambda}} \Bigr) \biggr]+1, \label{eq:finalregretbound}
\end{align} \end{small} where \begin{align}
    D_{+} =1+2\Ebb[\tau]+D_{\tau}+\psi_{\tau} \label{eq:D+}
\end{align}with
\begin{align*}
  D_{\tau}&= \min \Biggl\{\sqrt{2\alpha^2 \log \frac{3T}{2\delta}}, 2b \log \frac{3T}{2\delta} \Biggr \} \\
    \psi_{\tau}&= \frac{4}{3} \log \frac{3T}{2\delta}+2 \sqrt{2\Ebb[\tau]\log \frac{3T}{2\delta}}.  
\end{align*}
\end{theorem}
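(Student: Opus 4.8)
The regret analysis will combine the NeuralUCB confidence-ellipsoid machinery of Zhou et al. and Zhang et al. with the delayed-feedback accounting technique of Howson et al. The key observation is that at round $t$ the algorithm acts based on $\thetabf_{t-1}$ and $\Zbf_{t-1}$, which incorporate only the $|\Ical_{t-1}|$ rewards revealed so far, so the confidence width is inflated relative to the standard (immediate-feedback) case by whatever rounds are still "in flight." The number of such missing rewards at round $t$ is $t - |\Ical_{t-1}| = |\{s \le t : s+\tau_s > t-1\}|$, and controlling a high-probability bound on this quantity via Assumption~\ref{assum:delays} is what produces the $D_+$ term.

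\textbf{Step 1: Per-round confidence bound.} First I would invoke the neural-tangent-kernel concentration results (as in \cite{zhou2020neural,zhang2020neural}) to show that, on a high-probability event, for every $t$ and every arm $a$, $|h(\xbf_{t,a}) - f(\xbf_{t,a};\thetabf_{t-1})| \le \gamma_{t-1}\sqrt{g(\xbf_{t,a};\thetabf_{t-1})^\top \Zbf_{t-1}^{-1} g(\xbf_{t,a};\thetabf_{t-1})/m}$, where $\gamma_{t-1}$ is as in Eq.~\eqref{eq:gamma_t}. This uses the linearization of $f$ around $\thetabf_0$, the near-constancy of the gradients for large $m$, the assumption $S \ge \sqrt{2\mathbf h^\top \Hbb^{-1}\mathbf h}$ to control the RKHS norm of $h$, Assumption~\ref{assum:1} for the self-normalized noise bound, and the width requirement $m \ge \mathrm{poly}(\cdot)$ to absorb all the $m^{-1/6}$ error terms. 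Then by the UCB action rule, the instantaneous regret at round $t$ is bounded by $2\gamma_{t-1}\,\sigma_{t-1}(\xbf_{t,a_t})$ where $\sigma_{t-1}(\xbf) := \sqrt{g(\xbf;\thetabf_{t-1})^\top \Zbf_{t-1}^{-1} g(\xbf;\thetabf_{t-1})/m}$, plus lower-order terms folded into the $+1$.

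\textbf{Step 2: Summing the widths with delay bookkeeping.} Next I would bound $\sum_{t=1}^T \sigma_{t-1}(\xbf_{t,a_t})$. In the non-delayed case this is the elliptical-potential argument giving $\sqrt{T \log(\det\Zbf_T/\det(\lambda\mathbf I))}$, which is then bounded by $\sqrt{T(2\tilde d\log(1+TK/\lambda)+2)}$ via the effective-dimension definition Eq.~\eqref{eq:effectivedimension} and the gradient-determinant-vs-NTK-determinant comparison from \cite{zhou2020neural}. In the delayed case, $\Zbf_{t-1}$ omits the contributions of rounds still in flight; I would write $\Zbf_{t-1} = \Zbf^{\mathrm{full}}_{t-1} - \sum_{s \in \text{in-flight at }t} g_s g_s^\top/m$ and use the self-normalization/Sherman–Morrison-type argument of \cite{howson2023delayed} to pay an additive penalty proportional to the number of in-flight rounds $G_t := |\{s \le t : s+\tau_s > t-1\}|$. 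This yields $\sum_t \sigma_{t-1}(\xbf_{t,a_t}) \le \sqrt{T \cdot \beta_T} + \tfrac12 (\max_t G_t)\,\beta_T$ with $\beta_T := 2\tilde d\log(1+TK/\lambda)+2$, matching the bracket in Eq.~\eqref{eq:finalregretbound} once we identify $D_+$ with a high-probability bound on $\max_t G_t$.

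\textbf{Step 3: High-probability bound on the number of in-flight rounds.} This is the delay-specific heart of the argument. I would show $\max_{t \le T} G_t \le D_+$ with probability $1-\delta$ (or $1-\delta/2$, with the other half allocated to Step 1). Write $G_t \le \sum_{s \le t} \mathbf 1\{\tau_s > t-1-s\} \le \sum_{j=0}^{t-1}\mathbf 1\{\tau_{t-j} > j-1\}$. Split at a threshold near $\Ebb[\tau]$: the expected number of $s$ with $\tau_s$ exceeding the "recent window" is $O(\Ebb[\tau])$ by a tail-sum/Markov argument, and the deviation of $\sum_j \mathbf 1\{\tau_{t-j}>j-1\}$ around its mean is controlled by a Bernstein-type inequality for the sub-exponential delays, producing the $D_\tau$ term $\min\{\sqrt{2\alpha^2\log(3T/2\delta)},\,2b\log(3T/2\delta)\}$ from Assumption~\ref{assum:delays}, while the variance-proxy being itself $O(\Ebb[\tau])$ gives the $\psi_\tau = \tfrac43\log(3T/2\delta) + 2\sqrt{2\Ebb[\tau]\log(3T/2\delta)}$ term; a union bound over $t \in [T]$ accounts for the $T$ inside the logarithms. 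Altogether $G_t \le 1 + 2\Ebb[\tau] + D_\tau + \psi_\tau = D_+$.

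\textbf{Step 4: Assemble.} Finally, on the intersection of the two high-probability events, $R_T \le 2\gamma_{T} \sum_{t=1}^T \sigma_{t-1}(\xbf_{t,a_t}) + 1 \le \big(\sqrt{T\beta_T} + \tfrac{D_+}{2}\beta_T\big)\cdot 2\gamma_T + 1$, and substituting the closed-form upper bound on $\gamma_T$ (replacing $|\Ical_T| \le T$, $\log(\det\Zbf_T/\det(\lambda\mathbf I)) \le \tilde d\log(1+TK/\lambda)+2$, and $1-\eta m\lambda \le 1-\lambda/(TL)$) gives exactly Eq.~\eqref{eq:finalregretbound}.

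\textbf{Main obstacle.} The routine-but-delicate parts are the NTK linearization errors in Step 1, but those are imported wholesale from \cite{zhou2020neural,zhang2020neural}. The genuinely new difficulty is Step 3 combined with the coupling in Step 2: one must bound $\max_t G_t$ uniformly (not just $G_T$) with the right $\Ebb[\tau]$-dependence and no hidden $\sqrt{T}$ factor, and simultaneously verify that the elliptical-potential / self-normalized bound degrades only additively (not multiplicatively) in $G_t$ despite $\Zbf_{t-1}$ being a data-dependent, non-monotone-in-the-right-sense perturbation of the full design matrix — this is the step where the $\tilde d^{3/2}$ rather than $\tilde d$ dependence in the delay term enters, and getting the constants in $D_+$ to line up requires the careful Bernstein bookkeeping sketched above.
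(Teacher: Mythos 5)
Your proposal follows essentially the same route as the paper: a NeuralUCB-style confidence ellipsoid with the self-normalized bound adapted to the delayed filtration, the decomposition of $\Zbf_t^{-1}$ into the full design matrix $V_t^{-1}$ plus a correction driven by the in-flight samples $W_t$ (the paper's Lemma~\ref{lem:ellpiticalpotential_lemma}, which bounds the correction by $\tfrac{D_+}{2}\sum_t\|g/\sqrt{m}\|_{V_{t-1}^{-1}}^2$ via the lemmas of Howson et al.), and a sub-exponential Bernstein bound on the maximum number of missing rewards to obtain $D_+$. The decomposition, the role of the elliptical potential lemma on $V_t$, and the final assembly with the $\gamma_T$ bound all match the paper's argument.
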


 \subsection{Discussion on Theorem~\ref{thm:maintheorem}}
Theorem~\ref{thm:maintheorem} captures the impact of delayed reward observation through the second term $\frac{D_+}{2} \Bigl( 2 \tilde{d}\log\Bigl( 1+TK/\lambda\Bigr)+2\Bigr)$ in the first summation in Eq.~\eqref{eq:finalregretbound}, where the term $D_+$ depends on the expected delay $\Ebb[\tau]$ of the delay distribution. 
 In contrast, the regret bound of the NeuralUCB algorithm in \cite{zhang2020neural} does not include this additional term, as it assumes immediate reward feedback. Consequently, delayed reward feedback leads to increased regret, as expected.

In the regret upper bound in Eq.~\eqref{eq:finalregretbound}, there is a term that depends linearly on $T$: $(\lambda+C_4TL)\Bigl((1-\lambda/(TL))^{J/2}\sqrt{T/\lambda} \Bigr)$. This term characterizes the optimization error of Algorithm~\ref{alg:TrainNN} after $J$  gradient descent iterations. One can set 
$$J = 2 \log \frac{\lambda S}{\sqrt{T}(\lambda+C_4TL)} \frac{TL}{\lambda}$$ as in \cite{zhang2020neural} to ensure that the linear dependence is removed as $(\lambda+C_4TL)\Bigl((1-\lambda/(TL))^{J/2}\sqrt{T/\lambda} \Bigr)\leq \sqrt{\lambda}S$. 
Using the above insight, from Eq.~\eqref{eq:finalregretbound} we deduce that
the additional regret incurred due to delayed reward feedback scales as $\mathcal{O}(D_+ (\tilde{d} \log(1+TK/\lambda))^{3/2})$.

\section{Regret Analysis: Proof of Theorem~\ref{thm:maintheorem}}\label{sec:proof}
The proof of Theorem~\ref{thm:maintheorem} requires several key intermediate lemmas which we discuss here.

The first lemma below shows that for a sufficiently wide neural network model (i.e., when $m$ is sufficiently large), with high probability, the true unknown reward function $h(\xbf_i)$ can be approximated as a linear function of the gradient $g(\xbf_i;\thetabf_0)$ parameterized by $\thetabf^* -\thetabf_0$. 
\begin{lemma}[\cite{zhou2020neural}]
    There exists a positive constant $\bar{C}$ such that for any $\delta \in (0,1)$, if the depth of the neural network satisfies $m \geq \bar{C}T^4K^4L^6\log(T^2K^2L/\delta)/\lambda_0^4$, then with probability at least $1-\delta$, there exists a $\thetabf^* \in \mathbb{R}^p$ such that
   \begin{align*}
       h(\xbf_i)&= g(\xbf_i;\thetabf_0)^{\top} (\thetabf^* - \thetabf_0), \\
       \sqrt{m} \Vert \thetabf^* -\thetabf_0 \Vert_2 &\leq \sqrt{2\mathbf{h}^{\top} \Hbb^{-1} \mathbf{h}},
   \end{align*} for all $i \in [TK]$.
\end{lemma}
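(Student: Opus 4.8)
The plan is to reduce the statement to a finite-dimensional linear-algebra fact about the gradient feature map at initialization, combined with the standard neural-tangent-kernel (NTK) approximation of \cite{jacot2018neural} in the quantitative form used by \cite{zhou2020neural,zhang2020neural}. Stack the initial gradients into the matrix $G=\bigl[\,g(\xbf_1;\thetabf_0)/\sqrt m,\ \dots,\ g(\xbf_{TK};\thetabf_0)/\sqrt m\,\bigr]\in\Rbb^{p\times TK}$, and write $\mathbf{b}:=\sqrt m\,(\thetabf^*-\thetabf_0)\in\Rbb^{p}$. Then the two desired conclusions are exactly that (i) the interpolation system $G^\top\mathbf{b}=\mathbf{h}$ is solvable, and (ii) the chosen solution obeys $\|\mathbf{b}\|_2\le\sqrt{2\,\mathbf{h}^\top\Hbb^{-1}\mathbf{h}}$. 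The single external input I would invoke is the quantitative NTK bound: for $m\ge\bar{C}\,T^4K^4L^6\log(T^2K^2L/\delta)/\lambda_0^4$, with probability at least $1-\delta$ over the Gaussian draw of $\thetabf_0$, the Gram matrix of the initial gradients satisfies $\|G^\top G-\Hbb\|_2\le\lambda_0/2$, which in turn follows from entrywise NTK concentration at scale $O(\lambda_0/(TKL))$ together with $\|\cdot\|_2\le\|\cdot\|_F$.

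Given this, the construction is immediate. By Assumption~\ref{assum:3} we have $\Hbb\succeq\lambda_0\mathbf{I}$, so $2\,G^\top G-\Hbb=\Hbb+2(G^\top G-\Hbb)\succeq\lambda_0\mathbf{I}-\lambda_0\mathbf{I}=0$; hence $G^\top G\succeq\tfrac12\Hbb\succ0$ is invertible, and since $p=m+md+m^2(L-1)\gg TK$ the map $\mathbf{b}\mapsto G^\top\mathbf{b}$ is onto $\Rbb^{TK}$. I would take the minimum-norm solution
\[
\thetabf^*=\thetabf_0+\frac{1}{\sqrt m}\,G\,(G^\top G)^{-1}\mathbf{h},\qquad\text{i.e.}\qquad\mathbf{b}=G\,(G^\top G)^{-1}\mathbf{h}.
\]
Then for each $i$, $g(\xbf_i;\thetabf_0)^\top(\thetabf^*-\thetabf_0)=\mathbf{e}_i^\top(G^\top G)(G^\top G)^{-1}\mathbf{h}=h(\xbf_i)$, which is the first claim. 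For the norm bound, $m\|\thetabf^*-\thetabf_0\|_2^2=\|\mathbf{b}\|_2^2=\mathbf{h}^\top(G^\top G)^{-1}\mathbf{h}$, and $G^\top G\succeq\tfrac12\Hbb$ implies $(G^\top G)^{-1}\preceq2\Hbb^{-1}$, so $m\|\thetabf^*-\thetabf_0\|_2^2\le2\,\mathbf{h}^\top\Hbb^{-1}\mathbf{h}$, which is the second claim. (The degenerate case $\mathbf{h}=\obf$ is handled by $\thetabf^*=\thetabf_0$.)

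The main obstacle — the only non-elementary part — is the quantitative NTK approximation $\|G^\top G-\Hbb\|_2\le\lambda_0/2$ with the stated explicit polynomial width requirement: one must show that the random Gram matrix of the network gradients at Gaussian initialization concentrates around the deterministic recursively-defined kernel $\Hbb$, tracking the $L^6/\lambda_0^4$ and $T^4K^4$ dependence through per-layer forward/backward signal propagation and a union bound over the $(TK)^2$ context pairs. I would cite this directly, since it already underlies \cite{zhou2020neural,zhang2020neural}, rather than reprove it; everything downstream is the linear-algebra argument above. One minor bookkeeping point is to confirm that the structural assumption $[\xbf_i]_j=[\xbf_i]_{j+d/2}$ together with the symmetric Gaussian initialization forces $f(\xbf;\thetabf_0)=0$, so that the first-order expansion of $f(\cdot;\thetabf)$ around $\thetabf_0$ carries no constant term and $h(\xbf_i)=g(\xbf_i;\thetabf_0)^\top(\thetabf^*-\thetabf_0)$ is the exact (rather than merely affine) linearization target.
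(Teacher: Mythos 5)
Your proposal is correct and is essentially the standard argument: the paper itself gives no proof of this lemma (it only cites Lemma~5.1 of \cite{zhou2020neural}), and your construction of the minimum-norm interpolant $\thetabf^*=\thetabf_0+G(G^\top G)^{-1}\mathbf{h}/\sqrt{m}$ combined with the NTK Gram-matrix concentration $\|G^\top G-\Hbb\|_2\le\lambda_0/2$ is exactly how that cited lemma is proved there. The linear-algebra steps ($2G^\top G\succeq\Hbb$, hence $(G^\top G)^{-1}\preceq 2\Hbb^{-1}$ and the norm bound) all check out.
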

The next lemma will characterize how far $\thetabf_t$ is from initialization $\thetabf_0$.
\begin{lemma}\label{lem:key_1}
    There exists positive constants $\bar{C}_1$ and $\bar{C}_2$ such that for any $\delta \in (0,1)$, if the gradient descent step size $\eta \leq \bar{C}_1 (TmL+m\lambda)^{-1}$ and 
    \begin{align*} m &\geq \bar{C}_2 \max\left\{T^7\lambda^{-7} L^{21} (\log m)^3,\right.\\ & \left.\lambda^{-1/2}L^{-3/2} (\log(TKL^2/\delta))^{3/2} \right\}, \end{align*}
    then with probability at least $1-\delta$, we have $\Vert \thetabf_t -\thetabf_0 \Vert_2 \leq 2 \sqrt{|\Ical_t|/(m\lambda)}$  and $$ \Vert \thetabf^* -\thetabf_t\Vert_{\Zbf_t} \leq \frac{\gamma_t}{\sqrt{m}},$$ for all $t \in [T]$, where $\gamma_t$ is defined as in Eq.~\eqref{eq:gamma_t}.
\end{lemma}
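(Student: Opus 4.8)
The plan is to adapt the argument of Lemma~5.2 in \cite{zhou2020neural} (and its counterpart in \cite{zhang2020neural}) to the delayed-feedback setting, the key observation being that at round $t$ the network $\thetabf_t$ is trained only on the sample set $\Ical_t$ of time indices whose rewards have been revealed, so every occurrence of ``$t$'' as a sample count in the original analysis should be replaced by $|\Ical_t| \leq t$. Concretely, I would proceed in three stages.

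First, I would control the distance to initialization. The loss $\mathcal{L}(\thetabf)$ in Algorithm~\ref{alg:TrainNN} is $m\lambda$-strongly convex in a neighbourhood of $\thetabf_0$ (in the NTK regime where the network is almost linear), and its minimizer $\hat{\thetabf}_t$ over that neighbourhood satisfies $\sqrt{m}\|\hat{\thetabf}_t-\thetabf_0\|_2 \lesssim \sqrt{|\Ical_t|/\lambda}$ because each of the $|\Ical_t|$ squared-loss terms contributes $O(1)$ (rewards are bounded) and the ridge term has coefficient $m\lambda$. Running $J$ steps of gradient descent with step size $\eta \leq \bar{C}_1(TmL+m\lambda)^{-1}$ contracts the distance to $\hat{\thetabf}_t$ geometrically at rate $(1-\eta m\lambda)$, so by the triangle inequality $\|\thetabf_t-\thetabf_0\|_2 \leq 2\sqrt{|\Ical_t|/(m\lambda)}$ once $m$ is large enough that the linearization (Taylor) error of $f$ around $\thetabf_0$ — which scales like a polynomial in $|\Ical_t|/\lambda$ times $m^{-1/6}\sqrt{\log m}L^{\text{const}}$ — is dominated; this is exactly what the lower bound on $m$ in the statement buys us. The condition $|\Ical_t|\leq T$ lets me replace $|\Ical_t|$ by $T$ inside all the width requirements, recovering the stated form.

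Second, I would establish the confidence-ellipsoid bound $\|\thetabf^*-\thetabf_t\|_{\Zbf_t}\leq \gamma_t/\sqrt{m}$. Write $\thetabf_t$ as the ridge solution plus optimization error. For the ridge part, use Lemma~1 (the linearization $h(\xbf_i)=g(\xbf_i;\thetabf_0)^\top(\thetabf^*-\thetabf_0)$ with $\sqrt{m}\|\thetabf^*-\thetabf_0\|_2\leq S$) to reduce to a linear-bandit self-normalized martingale bound over the revealed index set $\Ical_t$: the sub-Gaussian noise (Assumption~\ref{assum:1}), together with the design matrix $\Zbf_t=\lambda\mathbf{I}+\sum_{s\in\Ical_t}g(\xbf_{s,a_s};\thetabf_{t-1})g(\xbf_{s,a_s};\thetabf_{t-1})^\top/m$, yields the $\nu\big[\log(\det\Zbf_t/\det(\lambda\mathbf{I}))-2\log\delta\big]^{1/2}+\sqrt{\lambda S}$ term. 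Here I must be careful that $\Ical_t$ is a random (delay-determined) subset, but since $\mathcal{F}_{t-1}$ already contains the realized delays of all past rounds, the filtration is still adapted and the standard self-normalized bound applies verbatim. The remaining pieces of $\gamma_t$ — the $\sqrt{1+C_1 m^{-1/6}\cdots|\Ical_t|^{7/6}\cdots}$ prefactor, the $C_2 m^{-1/6}\cdots|\Ical_t|^{5/3}\cdots$ additive term inside the bracket, and the $(\lambda+C_3|\Ical_t|L)(\cdots)$ term — absorb, respectively, the mismatch between $g(\cdot;\thetabf_{t-1})$ and $g(\cdot;\thetabf_0)$, the mismatch between the empirical Gram matrix and the NTK, and the gradient-descent optimization error from step one. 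Each of these is bounded by the perturbation lemmas of \cite{zhou2020neural} (Lemmas B.2–B.5 there), again with every sample count set to $|\Ical_t|$.

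The main obstacle I anticipate is bookkeeping the network-perturbation error terms so that they are uniformly small \emph{over all $t\in[T]$ simultaneously} while the training set $\Ical_t$ changes from round to round: unlike in \cite{zhou2020neural}, $\thetabf_t$ is not a monotone refinement of $\thetabf_{t-1}$ in a clean way (the loss $\mathcal{L}$ is rebuilt from $\thetabf_0$ each round on the set $\Ical_t$), so I need a single high-probability event — the event that $f$ and $g$ behave like their linearizations on a ball of radius $O(\sqrt{T/(m\lambda)})$ around $\thetabf_0$ — on which \emph{every} round's analysis runs. That event has probability $\geq 1-\delta$ precisely when $m$ exceeds the stated polynomial threshold, and a union bound over $t\in[T]$ is harmless (it only changes the polynomial). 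Once that uniform event is fixed, the per-round arguments of steps one and two go through unchanged, giving the two claimed inequalities for all $t\in[T]$.
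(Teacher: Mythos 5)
Your overall route is the same as the paper's: rerun Lemma~5.2 of \cite{zhou2020neural} with every sample count $t$ replaced by $|\Ical_t|$, using the restricted design matrix $\bar{\Zbf}_t=\lambda\mathbf{I}+\sum_{s=1}^t\mathbb{I}\{s+\tau_s\le t\}\,g(\xbf_{s,a_s};\thetabf_0)g(\xbf_{s,a_s};\thetabf_0)^{\top}/m$ and the corresponding restricted reward vector, and absorbing the gradient/Gram/optimization perturbation terms exactly as you describe. The distance-to-initialization step and the bookkeeping of the width-dependent error terms are fine.

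There is, however, one genuine gap, and it sits at precisely the step the paper identifies as the critical one. You assert that because $\mathcal{F}_{t-1}$ contains the realized delays, ``the standard self-normalized bound applies verbatim.'' It does not. Theorem~2 of \cite{abbasi2011improved} controls $\bigl\|\sum_{k\le n}\eta_k X_k\bigr\|_{\bar V_n^{-1}}$ for a sum over a \emph{prefix} of a predictable sequence; here the relevant quantity is $\bigl\|\sum_{s\in\Ical_t}\xi_s\, g(\xbf_{s,a_s};\thetabf_0)/\sqrt{m}\bigr\|_{\bar{\Zbf}_t^{-1}}$, a sum over the delay-determined \emph{random subset} $\Ical_t\subseteq[t]$, which is generally not a prefix and whose membership indicators $\mathbb{I}\{s+\tau_s\le t\}$ are themselves random. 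To make the martingale argument go through one must re-index the revealed rewards by their observation times, build the enlarged filtration that interleaves action choices with reward revelations, and check that each noise term remains a conditionally sub-Gaussian martingale difference with respect to that filtration --- a step that uses the independence of the delays (Assumption~\ref{assum:delays}) and the fact that Assumption~\ref{assum:1} is stated relative to the delay-aware $\mathcal{F}_{t-1}$. This is exactly what Lemma~1 of \cite{howson2023delayed} supplies, and it is the replacement for Theorem~2 of \cite{abbasi2011improved} that the paper's proof hinges on; your proposal needs to invoke it (or reproduce its argument) rather than claim the undelayed result transfers unchanged.
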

\begin{proof}
    The proof relies on the proof of Lemma~5.2 of \cite{zhou2020neural} using the following definitions:
    \begin{align*}
        \bar{\Zbf}_t &= \lambda \mathbf{I}+ \sum_{s=1}^t \mathbb{I}\{s+\tau_s \leq t\} g(x_{s,a_s};\thetabf_0)g(x_{s,a_s};\thetabf_0)^{\top}/m \\
        \bar{\mathbf{b}}_t &= \sum_{s=1}^t \mathbb{I}\{s+\tau_s \leq t\} r_{s,a_s}
 g(x_{s,a_s};\thetabf_0)/\sqrt{m}\\
 \bar{\gamma_t} &= R \sqrt{\log \frac{{\rm det} (\bar{\Zbf}_t)}{{\rm det} (\lambda \mathbf{I})}-2 \log \delta}+ \sqrt{\lambda}{S},\end{align*} where $\mathbb{I}\{q\}$ is an indicator function that evaluates to $1$ if the condition $q$ is true and zero otherwise. The critical change will be in the use of Theorem~2 of \cite{abbasi2011improved} which holds when there are no delayed reward feedbacks. In the presence of delays, this result can replaced with  Lemma~1 of \cite{howson2023delayed}.
\end{proof}

We now quantify the difference in true reward of the optimal action ($h(\xbf_{t,a^*_t})$) and the action taken according to Delayed NeuralUCB ($h(\xbf_{t,a_t})$) at each $t^{\rm th}$ round. The following result, which is a modification of Lemma~5.3 of \cite{zhou2020neural}, quantifies this as a function of the gradient norm, $\Vert g(\xbf_{t,a_t};\thetabf_{t-1})/\sqrt{m}\Vert_{\Zbf_{t-1}^{-1}}$.
\begin{lemma}[\cite{zhou2020neural}]\label{lem:3}
    Let $a^*_t = \arg \max_{a} h(\xbf_{t,a})$. There exists a positive constant $\bar{C}$ such that for any $\delta \in (0,1)$, if $\eta$ and $m$ satisfy the same conditions as in Lemma~\ref{lem:key_1}, then with probability at least $1-\delta$, we have
    \begin{align*}
        h(\xbf_{t,a^*_t})-h(\xbf_{t,a_t}) &\leq 2 \gamma_{t-1} \Vert g(\xbf_{t,a_t};\thetabf_{t-1})/\sqrt{m}\Vert_{\Zbf_{t-1}^{-1}} \\& + \bar{C}(Sm^{-1/6} \sqrt{\log m}t^{1/6} \lambda^{-1/6}L^{7/2}\\&+m^{-1/6}\sqrt{\log m}T^{2/3}\lambda^{-2/3}L^3).
    \end{align*}
\end{lemma}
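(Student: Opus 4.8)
The plan is to prove Lemma~\ref{lem:3} by adapting the argument of Lemma~5.3 of \cite{zhou2020neural} to the delayed-feedback setting, where the main change is that the design matrix $\Zbf_{t-1}$ and the estimate $\thetabf_{t-1}$ are now built only from the indices in $\Ical_{t-1}=\{s: s+\tau_s\le t-1\}$ rather than from all of $\{1,\dots,t-1\}$. The starting point is the linearization of the network: by Lemma~1, with high probability $h(\xbf_{t,a})=g(\xbf_{t,a};\thetabf_0)^\top(\thetabf^*-\thetabf_0)$, and standard NTK perturbation bounds (Lemma~B.3--B.5 of \cite{zhou2020neural}, valid whenever $m$ is large enough, which holds under the stated conditions) let us replace $g(\xbf;\thetabf_0)$ by $g(\xbf;\thetabf_{t-1})$ and $f(\xbf;\thetabf_{t-1})$ by its first-order Taylor expansion around $\thetabf_{t-1}$, each at the cost of an additive error of order $m^{-1/6}\sqrt{\log m}\,\mathrm{poly}(t,L,\lambda^{-1})$ — these are exactly the terms collected into the $\bar C(\cdots)$ remainder in the statement.

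Next I would run the usual optimism-based decomposition. Since $a_t=\arg\max_a U_{t,a}$, we have $U_{t,a_t}\ge U_{t,a_t^*}$, i.e.
\begin{align*}
 f(\xbf_{t,a_t};\thetabf_{t-1})+\gamma_{t-1}\Vert g(\xbf_{t,a_t};\thetabf_{t-1})/\sqrt m\Vert_{\Zbf_{t-1}^{-1}} \ge f(\xbf_{t,a_t^*};\thetabf_{t-1})+\gamma_{t-1}\Vert g(\xbf_{t,a_t^*};\thetabf_{t-1})/\sqrt m\Vert_{\Zbf_{t-1}^{-1}}.
\end{align*}
Writing $h(\xbf_{t,a^*_t})-h(\xbf_{t,a_t})$ as $[h(\xbf_{t,a^*_t})-U_{t,a^*_t}]+[U_{t,a^*_t}-U_{t,a_t}]+[U_{t,a_t}-h(\xbf_{t,a_t})]$, the middle bracket is $\le 0$, so it remains to control the two outer brackets. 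Each is of the form $h(\xbf_{t,a})-f(\xbf_{t,a};\thetabf_{t-1})\mp\gamma_{t-1}\Vert g/\sqrt m\Vert_{\Zbf_{t-1}^{-1}}$; using the Taylor expansion and the linearization, $h(\xbf_{t,a})-f(\xbf_{t,a};\thetabf_{t-1})\approx g(\xbf_{t,a};\thetabf_{t-1})^\top(\thetabf^*-\thetabf_{t-1})/\sqrt m\cdot\sqrt m$ up to the remainder, and by Cauchy--Schwarz in the $\Zbf_{t-1}$-norm this is bounded by $\Vert\thetabf^*-\thetabf_{t-1}\Vert_{\Zbf_{t-1}}\cdot\Vert g(\xbf_{t,a};\thetabf_{t-1})/\sqrt m\Vert_{\Zbf_{t-1}^{-1}}\cdot\sqrt m \le \gamma_{t-1}\Vert g(\xbf_{t,a};\thetabf_{t-1})/\sqrt m\Vert_{\Zbf_{t-1}^{-1}}$, where the last step invokes the second conclusion of Lemma~\ref{lem:key_1}, $\Vert\thetabf^*-\thetabf_{t-1}\Vert_{\Zbf_{t-1}}\le\gamma_{t-1}/\sqrt m$. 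Crucially, this self-normalized confidence bound is precisely what Lemma~\ref{lem:key_1} guarantees in the delayed setting, so the argument goes through with $\Zbf_{t-1}$ the delayed design matrix. For the $a_t^*$ term one uses optimism to drop it (the confidence width is added, not subtracted), and for the $a_t$ term one keeps $2\gamma_{t-1}\Vert g(\xbf_{t,a_t};\thetabf_{t-1})/\sqrt m\Vert_{\Zbf_{t-1}^{-1}}$ after combining the two contributions; all the $m^{-1/6}$-type errors from both brackets and from the Taylor/linearization steps are merged into the single remainder $\bar C(Sm^{-1/6}\sqrt{\log m}\,t^{1/6}\lambda^{-1/6}L^{7/2}+m^{-1/6}\sqrt{\log m}T^{2/3}\lambda^{-2/3}L^3)$, matching the exponents claimed.

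The main obstacle is \emph{bookkeeping the perturbation bounds against the delayed index set}: in \cite{zhou2020neural} the bounds $\Vert g(\xbf;\thetabf_{t-1})-g(\xbf;\thetabf_0)\Vert$, the Taylor residual $|f(\xbf;\thetabf_{t-1})-f(\xbf;\thetabf_0)-\langle g(\xbf;\thetabf_0),\thetabf_{t-1}-\thetabf_0\rangle|$, and the radius $\Vert\thetabf_{t-1}-\thetabf_0\Vert_2$ are all controlled in terms of $t$ (the number of observed samples), whereas here $|\Ical_{t-1}|\le t-1$, so I must verify that using $\Vert\thetabf_{t-1}-\thetabf_0\Vert_2\le 2\sqrt{|\Ical_{t-1}|/(m\lambda)}\le 2\sqrt{t/(m\lambda)}$ (the first conclusion of Lemma~\ref{lem:key_1}) everywhere is consistent and only improves the constants — which it does, since $|\Ical_{t-1}|\le t$. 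A secondary technical point is confirming that the event on which Lemma~1, Lemma~\ref{lem:key_1}, and the NTK perturbation bounds all hold can be taken simultaneously with probability $\ge 1-\delta$ by a union bound and a rescaling of $\delta$; this is routine. Beyond that, the proof is a direct transcription of \cite{zhou2020neural}'s Lemma~5.3 with $\Zbf_{t-1}$ reinterpreted as the delayed design matrix, so I would state it at that level of detail and cite \cite{zhou2020neural} for the perturbation estimates.
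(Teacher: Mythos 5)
Your proposal is correct and matches the paper's approach: the paper gives no standalone proof of this lemma, instead citing Lemma~5.3 of NeuralUCB directly, and the intended argument is exactly the optimism decomposition plus Cauchy--Schwarz against the self-normalized bound $\Vert\thetabf^*-\thetabf_{t-1}\Vert_{\Zbf_{t-1}}\le\gamma_{t-1}/\sqrt{m}$ from Lemma~\ref{lem:key_1}, with the NTK perturbation errors absorbed into the $\bar C(\cdot)$ remainder. Your observation that the delayed index set only shrinks the perturbation radii (since $|\Ical_{t-1}|\le t-1$) is precisely the bookkeeping that makes the transcription go through.
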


Using Lemma~\ref{lem:3}, the total regret $R_T$ can be bounded as
\begin{align}
    R_T &=\sum_{t=1}^T h(\xbf_{t,a^*_t}) -h(\xbf_{t,a_t})\nonumber\\&\leq \sum_{t=1}^T  2 \gamma_{t-1} \Vert g(\xbf_{t,a_t};\thetabf_{t-1})/\sqrt{m}\Vert_{\Zbf_{t-1}^{-1}} + \mathcal{D}_T \nonumber\\
    & \leq   2 \gamma_{T}\sum_{t=1}^T \Vert g(\xbf_{t,a_t};\thetabf_{t-1})/\sqrt{m}\Vert_{\Zbf_{t-1}^{-1}} + \mathcal{D}_T \label{eq:inter_1}
\end{align} where $\mathcal{D}_T=\bar{C}(Sm^{-1/6} \sqrt{\log m}T^{7/6} \lambda^{-1/6}L^{7/2}+m^{-1/6}\sqrt{\log m}T^{5/3}\lambda^{-2/3}L^3)$, and the last inequality follows from the monotonicity of $\gamma_t$ with respect to iteration $t$, yielding $\gamma_t \leq \gamma_T$.

Thus, to upper bound $R_T$, we must evaluate the summation $\sum_{t=1}^T \Vert g(\xbf_{t,a_t};\thetabf_{t-1})/\sqrt{m}\Vert_{\Zbf_{t-1}^{-1}}$. In the absence of delayed feedback, this summation can be evaluated using the well-known elliptical potential lemma \cite{abbasi2011improved}. However, elliptical potential lemma requires that the learner update the designer matrix at the end of every iteration with the most recent action. This, however, does not hold true for our designer matrix $\Zbf_t$. To circumvent this issue, we define the following terms:
\begin{align}
       & V_t  = \lambda \mathbf{I}+\sum_{s=1}^t \frac{g(x_{s,a_s};\thetabf_{t-1})g(x_{s,a_s};\thetabf_{t-1})^{\top}}{m}  \nonumber\\
        &W_t = \sum_{s=1}^t \mathbb{I}\{s+\tau_s>t\} \frac{g(x_{s,a_s};\thetabf_{t-1})g(x_{s,a_s};\thetabf_{t-1})^{\top}}{m}, \label{eq:W_t}
    \end{align} such that $V_t =W_t+\Zbf_t$.
The following result uses the above definition to get an upper bound on the term $\sum_{t=1}^T \Vert g(\xbf_{t,a_t};\thetabf_{t-1})/\sqrt{m}\Vert_{\Zbf_{t-1}^{-1}}$.
\begin{lemma}\label{lem:ellpiticalpotential_lemma}
Assume that $\lambda \geq \max\{1,\mathcal{L}^2\}$, where $\mathcal{L} \geq \Vert g(\xbf;\thetabf)/\sqrt{m}\Vert$ for all $\xbf,\thetabf$. There exists positive constant $C_1>0$ such that the following upper bound holds with probability at least $1-2\delta$, 
    \begin{align}
        &\sum_{t=1}^T \Vert \frac{g(\xbf_{t,a_t};\thetabf_{t-1})}{\sqrt{m}} \Vert_{\Zbf_{t-1}^{-1}} \leq \sqrt{T \Bigl( 2 \tilde{d}\log\Bigl( 1+\frac{TK}{\lambda}\Bigr)+\Gamma_1\Bigr)}\nonumber \\&  \quad + \frac{D_+}{2} \Bigl( 2 \tilde{d}\log\Bigl( 1+\frac{TK}{\lambda}\Bigr)+\Gamma_1\Bigr),
    \end{align}  where $D_+$ is as defined in Eq.~\eqref{eq:D+}, and 
    \begin{align}
        \Gamma_1 = 1+C_1m^{-1/6}\sqrt{\log m}L^4T^{5/3}\lambda^{-1/6}. \label{eq:Gamma_1}
    \end{align}
\end{lemma}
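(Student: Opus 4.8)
\textbf{Proof plan for Lemma~\ref{lem:ellpiticalpotential_lemma}.}

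The plan is to decompose the sum $\sum_{t=1}^T \Vert g(\xbf_{t,a_t};\thetabf_{t-1})/\sqrt{m}\Vert_{\Zbf_{t-1}^{-1}}$ by comparing the delayed design matrix $\Zbf_{t-1}$ against the ``complete-information'' matrix $V_{t-1}$ defined in Eq.~\eqref{eq:W_t}. First I would apply Cauchy--Schwarz to pull out a factor of $\sqrt{T}$, writing $\sum_{t=1}^T \Vert g_t/\sqrt{m}\Vert_{\Zbf_{t-1}^{-1}} \leq \sqrt{T \sum_{t=1}^T \Vert g_t/\sqrt{m}\Vert_{\Zbf_{t-1}^{-1}}^2}$ is too lossy; instead, the sharper route is to split the index set $[T]$ into rounds where $\Zbf_{t-1}$ is ``close'' to $V_{t-1}$ and rounds where it is not. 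The standard trick (as in Howson et al.\ for delayed linear bandits) is: for each $t$, either $\Vert g_t/\sqrt{m}\Vert_{\Zbf_{t-1}^{-1}}^2 \leq 1$, in which case $\Vert g_t/\sqrt{m}\Vert_{\Zbf_{t-1}^{-1}} \leq \min\{1, \text{something}\}$ can be related to the $V$-norm via $\Zbf_{t-1}^{-1} \preceq$ a bounded multiple of $V_{t-1}^{-1}$ plus the number of outstanding rewards $|W$-contributions$|$, or the term is bounded by the count of missing rewards at that round.

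The key steps, in order, are: (1) bound the number of rounds $t$ at which the set of outstanding (unobserved) rewards is large, using the sub-exponential delay Assumption~\ref{assum:delays} --- a concentration argument on $\sum_s \mathbb{I}\{s+\tau_s > t\}$ yields, with probability $1-\delta$, a uniform bound of order $\Ebb[\tau] + D_\tau + \psi_\tau$ on the number of missing rewards at any round, which is exactly where $D_+$ in Eq.~\eqref{eq:D+} comes from (the $1+2\Ebb[\tau]+D_\tau+\psi_\tau$ structure); (2) on the ``good'' rounds, use the identity $V_{t-1} = \Zbf_{t-1} + W_{t-1}$ together with the operator-norm bound $\Vert W_{t-1}\Vert \leq (\text{number of missing rewards})\cdot \mathcal{L}^2 \leq \lambda \cdot (\text{missing count})$ (using $\lambda \geq \mathcal{L}^2$) to show $\Zbf_{t-1}^{-1} \preceq (1 + \text{missing count})\, V_{t-1}^{-1}$ or a similar comparison, so that $\Vert g_t/\sqrt m\Vert_{\Zbf_{t-1}^{-1}} \leq \sqrt{1+D_+}\,\Vert g_t/\sqrt m\Vert_{V_{t-1}^{-1}}$; (3) apply the elliptical potential lemma to $\sum_t \Vert g_t/\sqrt m\Vert_{V_{t-1}^{-1}}$ --- this is legitimate because $V_t$ \emph{does} get updated with the current action $a_t$ at every round --- combined with Cauchy--Schwarz to get $\sqrt{T}\cdot\sqrt{\sum \min\{1,\Vert\cdot\Vert^2\}} \lesssim \sqrt{T \log\det(V_T/\lambda\mathbf{I})}$; (4) convert $\log\det(V_T/(\lambda\mathbf{I}))$ to the effective dimension $\tilde d$ via the NTK approximation $g(\xbf_i;\thetabf_0)g(\xbf_i;\thetabf_0)^\top/m \approx \Hbb$ entrywise (incurring the $\Gamma_1$ correction term of Eq.~\eqref{eq:Gamma_1} from the $m^{-1/6}$ network-width error, as in Lemma~5.4 of \cite{zhou2020neural}), giving $\log\det \leq \tilde d\log(1+TK/\lambda) + C_1 m^{-1/6}\cdots$; and (5) collect the ``bad'' rounds into the additive $\frac{D_+}{2}(2\tilde d\log(1+TK/\lambda)+\Gamma_1)$ term and union-bound the two failure events (delay concentration and NTK approximation) to get probability $1-2\delta$.

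The main obstacle I expect is step (2): correctly controlling the mismatch between $\Zbf_{t-1}$ and $V_{t-1}$ uniformly over $t$ in a way that produces exactly the claimed split into a $\sqrt{T}(\cdots)$ term plus a $\frac{D_+}{2}(\cdots)$ term, rather than a multiplicative $\sqrt{D_+}$ blow-up on the whole sum. The clean way to do this is the Howson-style argument: bound $\Vert g_t/\sqrt m\Vert_{\Zbf_{t-1}^{-1}} \le \Vert g_t/\sqrt m\Vert_{V_{t-1}^{-1}} + (\text{correction})$ where the correction is nonzero only when the outstanding-reward count is positive, and then sum the corrections against the uniform delay bound $D_+$; this separates additively. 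A secondary subtlety is that $\thetabf_{t-1}$ appears inside the gradients in both $\Zbf_{t-1}$, $V_{t-1}$, and the summand, so one must either invoke the near-constancy of $g(\xbf;\thetabf_{t-1}) \approx g(\xbf;\thetabf_0)$ for wide networks (Lemma~\ref{lem:key_1}) to freeze the parameter at $\thetabf_0$ before applying the elliptical potential lemma, absorbing the resulting perturbation into $\Gamma_1$, or carry the $\thetabf_{t-1}$-dependence through carefully --- I would do the former, following the template of \cite{zhou2020neural,zhang2020neural}.
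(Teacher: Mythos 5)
Your proposal is correct and, after you discard the multiplicative $\sqrt{D_+}$ route in favor of the Howson-style additive correction, it matches the paper's proof essentially step for step: the paper uses the identity $\Zbf_{t}^{-1}=V_{t}^{-1}+M_{t}^{-1}$ with $M_t^{-1}=V_t^{-1}W_t\Zbf_t^{-1}$, triangle inequality plus Cauchy--Schwarz on the $V$-norm term, the sub-exponential delay concentration to bound $\sum_t\Vert\cdot\Vert_{M_{t-1}^{-1}}$ by $\tfrac{D_+}{2}\sum_t\Vert\cdot\Vert^2_{V_{t-1}^{-1}}$, the elliptical potential lemma on $V_t$, and the NTK/width approximation to produce $\Gamma_1$, with a union bound giving $1-2\delta$. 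Your flagged subtlety about freezing the gradients at $\thetabf_0$ is exactly how the paper handles it (via $\bar{V}_t$ and equation (B.18) of the NeuralUCB analysis).
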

\begin{proof}
    Using the definition of $V_t$ and $W_t$ in Eq.~\eqref{eq:W_t},  Lemma~2 of \cite{howson2023delayed} yields that
    \begin{align}
        \Zbf_t^{-1} = V_t^{-1}+ M_t^{-1}, \quad M_t^{-1}=V_t^{-1}W_t\Zbf_t^{-1}. \nonumber
    \end{align}
Using this, we get that
\begin{align}
  &\sum_{t=1}^T \Vert \frac{g(\xbf_{t,a_t};\thetabf_{t-1)}}{\sqrt{m}} \Vert_{\Zbf_{t-1}^{-1}} \nonumber\\&  \leq \sum_{t=1}^T \Vert\frac{g(\xbf_{t,a_t};\thetabf_{t-1})}{\sqrt{m}} \Vert_{V_{t-1}^{-1}} +\sum_{t=1}^T \Vert \frac{g(\xbf_{t,a_t};\thetabf_{t-1})}{\sqrt{m}}\Vert_{M_{t-1}^{-1}} \nonumber\\
  & \leq \sqrt{T \sum_{t=1}^T \Vert \frac{g(\xbf_{t,a_t};\thetabf_{t-1)}}{\sqrt{m}} \Vert^2_{V_{t-1}^{-1}}} + \sum_{t=1}^T \Vert \frac{g(\xbf_{t,a_t};\thetabf_{t-1})}{\sqrt{m}} \Vert_{M_{t-1}^{-1}}, \label{eq:1}
\end{align} where the first inequality follows from triangle inequality, and the second inequality follows from applying Cauchy-Schwarz inequality. 

We now upper bound the second term of Eq.~\eqref{eq:1}. To this end, define $G_t =\sum_{s=1}^t\mathbb{I}\{s+\tau_s > t\}$ as the number of missing rewards at the end of round $t$. Combining Lemma~3, Lemma~4 and Lemma~5 of \cite{howson2023delayed} help to leverage  the  sub-exponential property of the delay distribution (Assumption~\ref{assum:delays}) to yield the following upper bound: with probability at least $1-2\delta$, we have
\begin{align}
    \sum_{t=1}^T \Vert \frac{g(\xbf_{t,a_t};\thetabf_{t-1})}{\sqrt{m}} \Vert_{M_{t-1}^{-1}} \leq \frac{D_{+}}{2} \sum_{t=1}^T \Vert \frac{g(\xbf_{t,a_t};\thetabf_{t-1})}{\sqrt{m}} \Vert_{V_{t-1}^{-1}}^2 \label{eq:2}
\end{align}where $D_+$ is as defined in Eq.~\eqref{eq:D+}.

All thats left now is to upper bound the term $\sum_{t=1}^T \Vert \frac{g(\xbf_{t,a_t};\thetabf_{t-1})}{\sqrt{m}} \Vert_{V_{t-1}^{-1}}^2$. To this end, we can now apply the elliptical potential lemma (Lemma~11 of \cite{abbasi2011improved}) as the matrix $V_t$, defined in \eqref{eq:W_t}, is updated at the end of every iteration with the most recent action. For $\lambda \geq \max(1, \mathcal{L}^2 )$ where $\Vert g(\xbf;\thetabf)\Vert_2 \leq \mathcal{L}$ for all $\xbf,\thetabf$, we then have that 
\begin{align}
   & \sum_{t=1}^T \Vert \frac{g(\xbf_{t,a_t};\thetabf_{t-1})}{\sqrt{m}} \Vert_{V_{t-1}^{-1}}^2 \leq 2 \log \frac{{\rm det}(V_T)}{{\rm det}(\lambda \mathbf{I})}\nonumber\\
    & \leq 2 \log \frac{{\rm det}(\bar{V}_T)}{{\rm det}(\lambda \mathbf{I})}+C_1m^{-1/6}\sqrt{\log m}L^4T^{5/3}\lambda^{-1/6}\nonumber \\
    &\leq 2 \tilde{d}\log\Bigl( 1+TK/\lambda\Bigr)+1+C_1m^{-1/6}\sqrt{\log m}L^4T^{5/3}\lambda^{-1/6}, \label{eq:logratiobound}
\end{align} where the second and last inequalities follow from equation (B.18) of \cite{zhou2020neural} with $\tilde{d}$ denoting the effective dimension, and  $$\bar{V}_t = \frac{1}{m} \lambda \mathbf{I}+\sum_{s=1}^t g(x_{s,a_s};\thetabf_0)g(x_{s,a_s};\thetabf_0)^{\top}.$$
Substituting \eqref{eq:logratiobound} and \eqref{eq:2} in \eqref{eq:1} yields the required bound.
\end{proof}
Now we present the proof of our main result below.

\noindent{\em Proof of Theorem~\ref{thm:maintheorem}:}
Using Lemma~\ref{lem:ellpiticalpotential_lemma} in Eq.~\eqref{eq:inter_1}, we get that the cumulative regret
\begin{align}
    R_T 
    & \leq 2\gamma_T \Biggl[\sqrt{T \Bigl( 2 \tilde{d}\log\Bigl( 1+\frac{TK}{\lambda}\Bigr)+\Gamma_1\Bigr)}\nonumber \\&  \quad + \frac{D_+}{2} \Bigl( 2 \tilde{d}\log\Bigl( 1+TK/\lambda\Bigr)+\Gamma_1\Bigr) \Biggr]+ \mathcal{D}_T. \label{eq:upperbound_1}
\end{align}

We now upper bound $\gamma_T$ as defined in Eq.~\eqref{eq:gamma_t}. Recalling $\Ical_t =\{s: s+\tau_s \leq t\}$, we make the following observations: $|\Ical_T| \leq T$ and the designer matrix $\Zbf_T$ satisfies that $\Zbf_T \prec V_T$, where $V_t$ is as defined in Eq.~\eqref{eq:W_t}. The latter yields that $$\log  \frac{{\rm det}(\Zbf_T)}{{\rm det}(\lambda \mathbf{I})}\leq \log  \frac{{\rm det}(V_T)}{{\rm det}(\lambda \mathbf{I})} \leq \tilde{d}\log(1+TK/\lambda)+ \Gamma_1, $$ with the last upper bound following from Eq.~\eqref{eq:logratiobound}. Using all these together yields the following upper bound on $\gamma_T$,
\begin{align*}
    \gamma_T 
    & \leq \Gamma_2\Bigl( \nu \Bigl[\tilde{d}\log(1+TK/\lambda)+ \Gamma_1-2\log \delta\Bigr]^{1/2}  +\sqrt{\lambda S}\Bigr)\nonumber\\&+(\lambda+C_3TL)\Bigl((1-\eta m\lambda)^{J/2}\sqrt{T/\lambda} \\&+\Gamma_3(1+\sqrt{T/\lambda}) \Bigr),
\end{align*}where we have defined $\Gamma_1$ as in Eq.~\eqref{eq:Gamma_1} and 
\begin{align*}
    \Gamma_2 & = \sqrt{1+C_3m^{-1/6}\sqrt{\log m}L^4T^{7/6}\lambda^{-7/6)}}\\
    \Gamma_3& =m^{-1/6}\sqrt{\log m}L^{7/2}T^{5/3}\lambda^{-5/3}.
\end{align*}
Using the derived bound on $\gamma_T$ in Eq.~\eqref{eq:upperbound_1}, we get
\begin{align*}
    &R_T \leq \biggl[2 \Gamma_2\Bigl( \nu \Bigl[\tilde{d}\log(1+TK/\lambda)+ \Gamma_1 -2\log \delta\Bigr]^{1/2}  +\sqrt{\lambda S}\Bigr) \\
    &+2(\lambda+C_3TL)\Bigl((1-\eta m\lambda)^{J/2}\sqrt{T/\lambda} +\Gamma_3(1+\sqrt{T/\lambda}) \Bigr)\biggr] \\
    & \Biggl[\sqrt{T \Bigl( 2 \tilde{d}\log\Bigl( 1+\frac{TK}{\lambda}\Bigr)+\Gamma_1\Bigr)}\nonumber \\
    &  \quad + \frac{D_+}{2} \Bigl( 2 \tilde{d}\log\Bigl( 1+TK/\lambda\Bigr)+\Gamma_1\Bigr) \Biggr]+ \mathcal{D}_T\\
    & \leq \biggl[2\Bigl( \nu \Bigl[\tilde{d}\log(1+TK/\lambda)+ 2 -2\log \delta\Bigr]^{1/2}  +2\sqrt{\lambda S}\Bigr)\\
    &+2(\lambda+C_4TL)\Bigl((1-\eta m\lambda)^{J/2}\sqrt{T/\lambda} \Bigr) \biggr]\\
    &\Biggl[\sqrt{T \Bigl( 2 \tilde{d}\log\Bigl( 1+\frac{TK}{\lambda}\Bigr)+2\Bigr)}\nonumber \\
    &  \quad + \frac{D_+}{2} \Bigl( 2 \tilde{d}\log\Bigl( 1+TK/\lambda\Bigr)+2\Bigr) \Biggr]+1,
\end{align*}
where the last inequality holds for sufficiently large $m$.  This completes the proof of  Theorem~\ref{thm:maintheorem}.
\qed
\section{Experiments and Discussion}\label{sec:experiments}
\begin{figure*}[h!]
\centering
\subfloat{\includegraphics[width=1\columnwidth,scale=0.4, clip=true, trim=0in 0in 0.6in 0.2in]{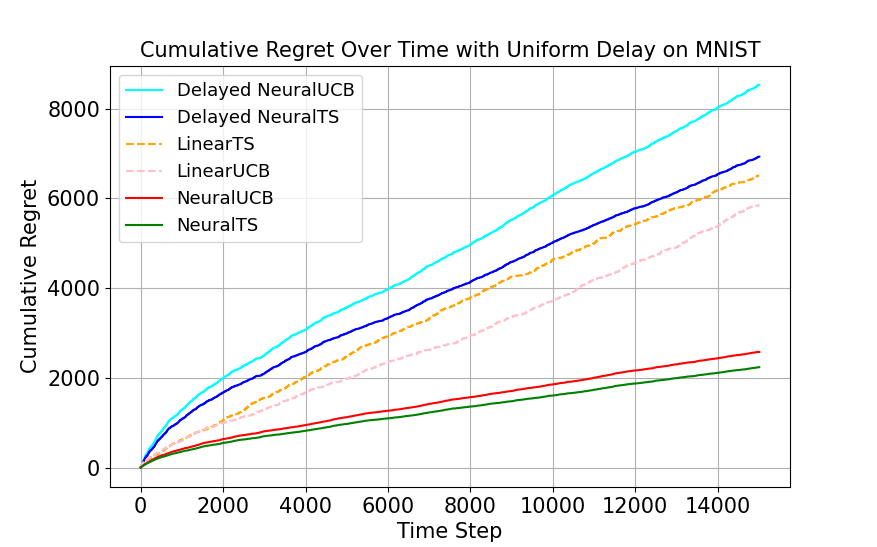}}
\hfil
\subfloat{\includegraphics[width=1\columnwidth,scale=0.4,clip=true, trim=0in 0in 0.6in 0.2in]{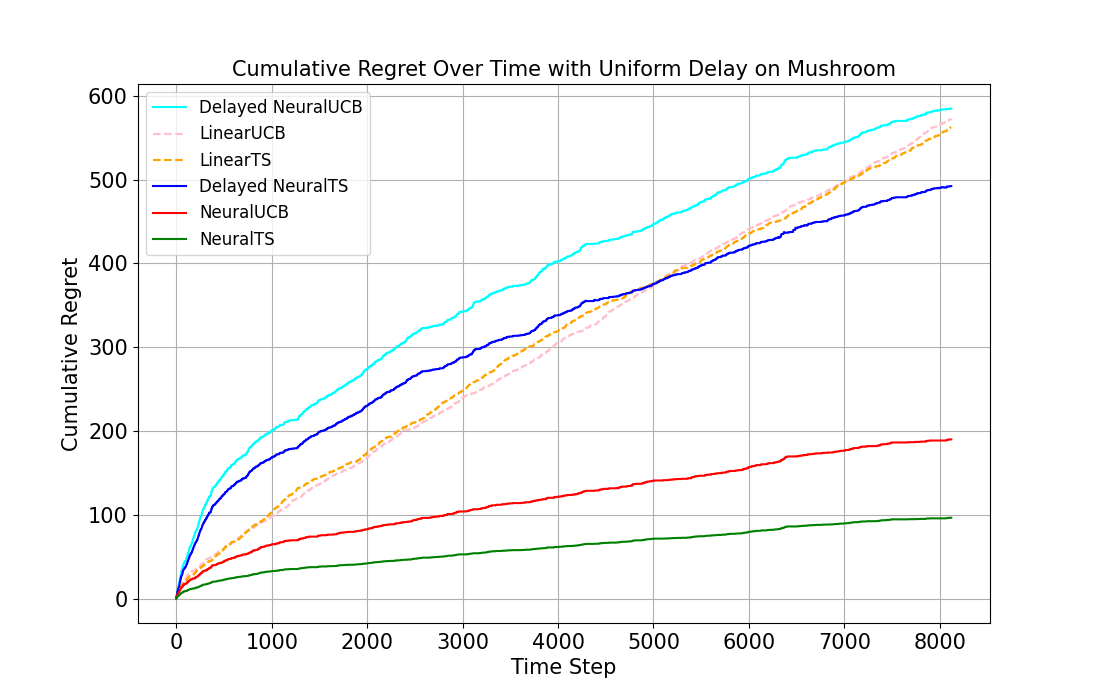}}
\caption{{\small Comparison of the cumulative regret of the algorithms with no delayed feedback -- LinTS/UCB, Neural-TS/UCB -- with our proposed algorithms Delayed Neural-UCB/TS under uniform delay with $\Ebb[\tau]=30$, as a function of the number of iterations on (left) MNIST and (right) Mushroom datasets. We run 5 experiments and plot the mean regret.}}\label{fig:uniformdelay_mnistmushroom}

\end{figure*}
\begin{figure*}[h!]
\centering
\subfloat{\includegraphics[width=1\columnwidth,scale=0.4,clip=true, trim=0in 0.2in 0.4in 0.6in]{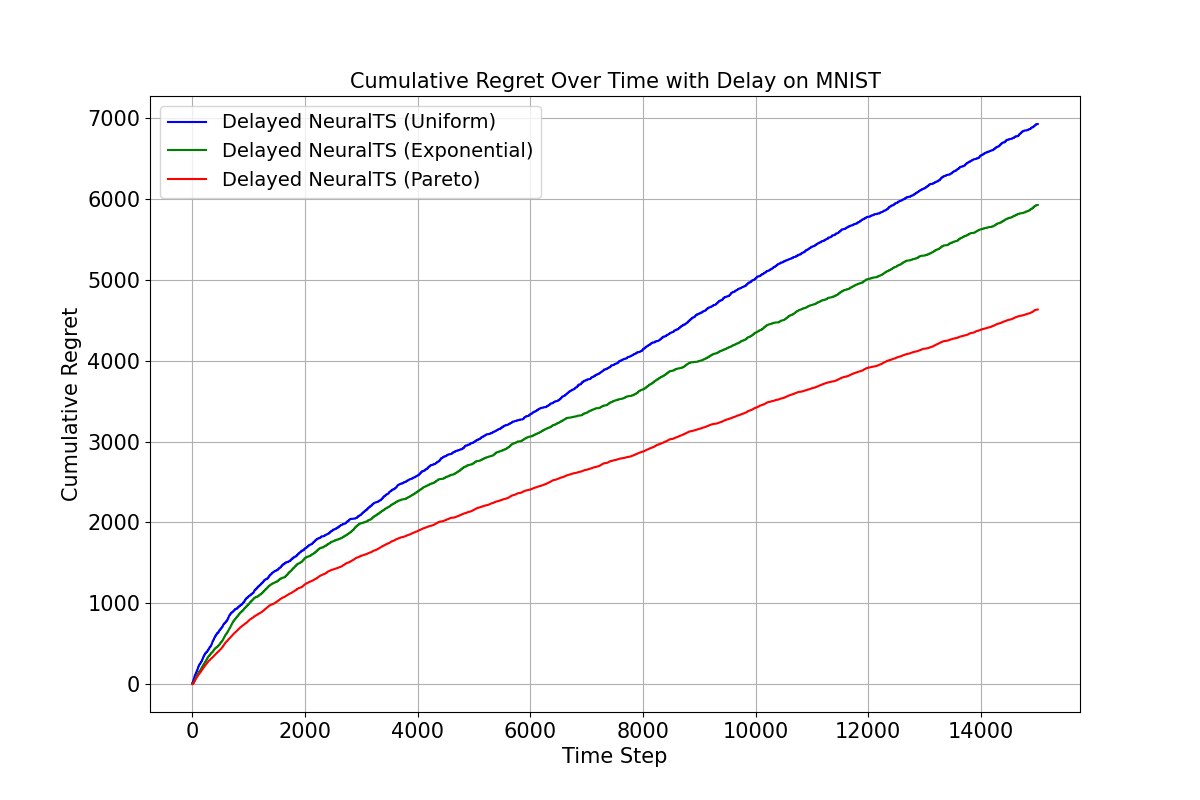}}
\hfil
\subfloat{\includegraphics[width=1\columnwidth,scale=0.4,clip=true, trim=0in 0.2in 0.4in 0.6in]{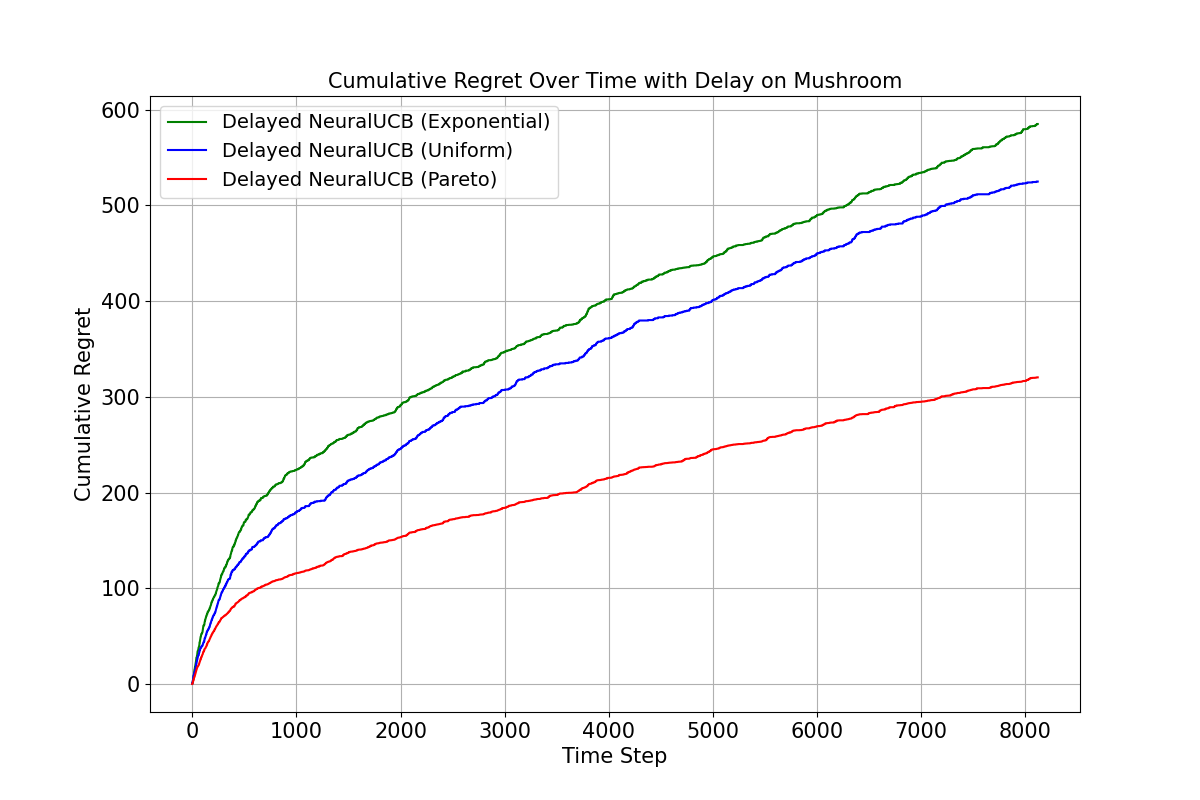}}
\caption{{\small (Left) Comparison of the cumulative regret of the Delayed NeuralTS under uniform, exponential and Pareto delays with $\Ebb[\tau]=30$ as a function of the number of iterations on MNIST. (Right) Comparison of the cumulative regret of the Delayed NeuralUCB under the three delays on Mushroom.}}\label{fig:differentdelays}
\vspace{-0.6cm}
\end{figure*}
In this section, we evaluate the performance of the proposed Delayed NeuralUCB algorithm under three different delay distributions \cite{howson2023delayed}: 
Exponential$(\Delta)$ with $\Delta =1/\Ebb[\tau]$, Uniform$(0,B)$ with $B=2\Ebb[\tau]$ and Pareto$(a,x_m=1)$ with $a=(1+\Ebb[\tau])/\Ebb[\tau]$, for some fixed $\Ebb[\tau]$. Note that Assumption~\ref{assum:delays} holds for uniform and exponential distributions, but not for Pareto distribution.

\subsection{Dataset}
We evaluate our algorithms on real-world datasets -- MNIST \cite{lecun1998gradient} and Mushroom \cite{mushroom_73}. Note that these are $K$-class classification datasets. To convert the classification problem to $K$-armed contextual bandit problem, we adopt the approach of \cite{zhou2020neural}. Specifically, each data sample $\xbf \in \mathbb{R}^d$ is transformed into $K$ contextual vectors: $\xbf^{(1)}=(\xbf,\obf,\cdots,\obf)$, $\hdots$, $\xbf^{(K)}=(\obf,\obf,\hdots,\xbf) \in \Rbb^{dK}$ corresponding to each of the $K$ classes. The agent receives a reward of $1$ if he/she classifies the context correctly and $1$ otherwise. We further add a small mean-zero Gaussian noise with variance $0.001$ to the above reward. Note that for MNIST data set ($28\times 28\times 1$ pixels) consisting of $K=10$ classes, the above transformation yields contextual vectors of dimension $28\times 28\times 10=7840$. In contrast, the mushroom dataset consists of $d=22$ dimensional input samples and $K=2$ classes, results in contextual vectors of dimension $44$. The high-dimensionality of contextual vectors studied here reveals interesting insights that are lacking in the existing works \cite{howson2023delayed} on delayed reward feedback.
\subsection{Benchmark Algorithms}
We now describe various algorithms used to benchmark the performance of our Delayed NeuralUCB algorithm against. To this end, we first consider 
\textit{linear contextual UCB \cite{chu2011contextual} and linear contextual Thompson Sampling (TS) \cite{agrawal2013thompson}} algorithms. These algorithms assume linear reward model for CBs and use UCB and TS respectively for exploration. For experiments on Mushroom and MNIST dataset, we analyze how well these standard algorithms perform when no delay in reward feedback is assumed, thereby setting a benchmark for best-case cumulative regret bound. 

However, the assumption of linear reward  models may prove insufficient to capture complex correlations between the reward-action-context tuples. Consequently, we then consider
\textit{NeuralUCB \cite{zhou2020neural} and NeuralTS \cite{zhang2020neural}} algorithms that leverage highly non-linear neural network models to better capture complex correlations, while employing UCB or TS-based explorations. No delay in reward feedback is assumed, and these algorithms set another benchmark for best-case cumulative regret. 

The last algorithm we consider is the \textit{Delayed NeuralTS} algorithm,  which is an adaptation of NeuralTS to account for delayed reward feedback, similar to the proposed Delayed NeuralUCB algorithm. At each $t^{\rm th}$ iteration, the algorithm uses TS-based exploration to update a posterior distribution of the reward model using the history of observed data $\{(\xbf_{s,a_s},a_s,r_s)\}_{s \in \mathcal{I}_t}$ with complete reward information. Specifically, for each action $a \in [1,\hdots,K]$, the following steps replace lines 5--8 of Algorithm~\ref{alg:neuralUCB}: TS algorithm samples an estimated reward $\tilde{r}_{t,a} \sim \mathcal{N}(f(\xbf_{t,a};\thetabf_{t-1}), \nu^2 \sigma^2_{t,a})$ from the Gaussian posterior whose variance is determined by the exploration parameter $\nu$ and  \begin{align} \sigma^2_{t,a} = \lambda g(\xbf_{t,a}; \theta_{t-1})^\top \Zbf_{t-1}^{-1} g(\xbf_{t,a}; \theta_{t-1}) / m, \label{eq:TS_variance} \end{align} with $\Zbf_t$ and $\thetabf_t$ updated as in Algorithm~\ref{alg:neuralUCB}. Subsequently, the action $a_t =\arg \max_a \tilde{r}_{t,a}$ that maximizes the estimated reward is selected.

\subsection{Experiment and Results}

For all our algorithms, we choose a two-layer neural network $f(\xbf;\theta)=\sqrt{m}\Wbf_2 \sigma(\Wbf_1 \xbf)$ with network width $m= 128 $. We used $\gamma_t$ as in Eq.~\eqref{eq:gamma_t} with $C_1=C_2=C_3=1$.  Other parameters were fixed as $\nu= 1$, $\lambda= 1$, $\delta= 0.05$ and $S= 0.0001$. For training the neural network, we used stochastic gradient descent with a batch size of 64, $J=t$ at round $t$, and a fixed learning rate of $\eta=0.001$.

Figure~\ref{fig:uniformdelay_mnistmushroom} compares the performance of our proposed delayed neural contextual algorithms -- Delayed NeuralUCB/TS  under uniform delay with $\Ebb[\tau]=30$ -- with standard algorithms Neural TS/UCB and Lin-UCB/TS under no delays. The standard algorithms are unaffected by delays, yielding the optimal performance to benchmark our algorithms against. Furthermore, as seen from Figure~\ref{fig:uniformdelay_mnistmushroom}, the linear TS/UCB algorithms, that fit linear reward models, fail to model complex reward relationships in MNIST/Mushroom classification problems compared to neural TS/UCB, resulting in higher regret.

Experiments on MNIST Figure~\ref{fig:uniformdelay_mnistmushroom} (left), with 7840-dimensional context vectors, show that Delayed NeuralTS algorithm, in particular, achieves regret close to LinearTS algorithm. In fact, this performance can be seen to improve on  Mushroom dataset (right), consisting of 44-dimensional context vectors, with Delayed NeuralUCB achieving comparable performance to LinearUCB/TS algorithms. 
Furthermore, across all algorithms, TS-based exploration seems to empirically yield lower regret than UCB-based exploration. 

Figure~\ref{fig:differentdelays} compares the regret of our proposed algorithms under three delay distributions -- uniform, exponential, and Pareto, as a function of the iterations. Figure~\ref{fig:differentdelays} (left) compares Delayed NeuralTS  on MNIST, and Figure~\ref{fig:differentdelays} (right) compares Delayed NeuralUCB  on Mushroom. The experiments show that our algorithms can effectively handle different delays. Although Pareto distribution does not satisfy Assumption~\ref{assum:delays}, our algorithm still performs well under Pareto delays, incurring lower cumulative regret than uniform and exponential.
\vspace{-0.1 cm}
\section{Conclusion}\label{sec:conclusion}
Neural contextual bandits can handle complex, high-dimensional data efficiently, optimizing decision-making through adaptive learning from contextual information, making them applicable in many real-world problems.
This work introduced a new algorithm, called Delayed NeuralUCB, for neural contextual bandits that are tailored to handle delayed reward feedback. We show that the cumulative regret of the algorithm scales as $O(\tilde{d} \sqrt{T \log T}+  \tilde{d}^{3/2} D_+\log(T)^{3/2})$. Additionally, we also propose a variant of the algorithm, called Delayed NeuralTS, that uses Thompson Sampling based exploration. The proposed algorithms have been empirically demonstrated to effectively handle different types of delays on diverse real-world datasets.
Future works include deriving theoretical regret guarantees for Delayed NeuralTS algorithm, and handling different delayed reward feedback mechanisms, including delayed aggregated reward feedback.
\bibliography{ref}
\bibliographystyle{ieeetr}
\newpage
\end{document}